\title{Neural Networks Perform Sufficient Dimension Reduction}
\author{
    Shuntuo Xu, Zhou Yu\footnote{Corresponding author.}
}
\newcommand{\T}{\top}
\newcommand{\E}{\mathbb{E}}
\newcommand{\var}{\mathrm{Var}}
\newcommand{\indep}{\perp\!\!\!\perp}
\newcommand{\real}{\mathbb{R}}
\newcommand{\argmin}[1]{\mathop{\mathrm{argmin}}\limits_{#1}}
\newtheorem{theorem}{Theorem}
\newtheorem{assumption}{Assumption}
\newtheorem{lemma}{Lemma}
\begin{document}

\maketitle

\begin{abstract}
This paper investigates the connection between neural networks and sufficient dimension reduction (SDR), demonstrating that neural networks inherently perform SDR in regression tasks under appropriate rank regularizations. Specifically, the weights in the first layer span the central mean subspace. We establish the statistical consistency of the neural network-based estimator for the central mean subspace, underscoring the suitability of neural networks in addressing SDR-related challenges. Numerical experiments further validate our theoretical findings, and highlight the underlying capability of neural networks to facilitate SDR compared to the existing methods. Additionally, we discuss an extension to unravel the central subspace, broadening the scope of our investigation.
\end{abstract}

\section{Introduction}

Neural networks have achieved significant success in a tremendous variety of applications \citep{lee2017large, silver2018general, jumper2021highly, brandes2022proteinbert, thirunavukarasu2023large}. As the most essential foundation, a feedforward neural network is typically constructed by a series of linear transformations and nonlinear activations. To be specific, a function $f$ implemented by a feedforward neural network with $L$ layers can be represented as \begin{equation} 
f(x)=\phi_L\circ\sigma_{L-1}\circ\phi_{L-1}\circ\cdots \circ\sigma_0\circ\phi_0(x). 
\label{Eq: nn_form} 
\end{equation} 
Here, $\circ$ is the functional composition operator, $\phi_i(z)=W_i^{\T}z+b_i$ denotes the linear transformation and $\sigma_i$ represents the elementwise nonlinear activation function. Despite a clear architecture, the formula \eqref{Eq: nn_form} provides limited insight into how the information within the input data is processed by the neural networks. To comprehensively understand how neural networks retrieve task-relevant information, there have been extensive efforts towards unraveling their interpretability \citep{doshivelez2017rigorousscienceinterpretablemachine, guidotti2018survey, zhang2021survey}. For instance, post-hoc interpretability \citep{lipton2018mythos} focused on the predictions generated by neural networks, while disregarding the detailed mechanism and feature importance. \citet{ghorbani2019interpretation} highlighted the fragility of this type of interpretation, as indistinguishable perturbations could result in completely different interpretations.

For the sake of striking a balance between the complexity of representation and the power of prediction, \citet{tishby2015deep} and \citet{saxe2019information} pioneered the interpretability of deep neural networks via the information bottleneck theory. \citet{ghosh2022sufficient} further linked the information bottleneck to sufficient dimension reduction (SDR) \citep{li1991sliced, cook1996graphics}, which is a rapidly developing research field in regression diagnostics, data visualization, pattern recognition and etc.

In this paper, we provide a theoretical understanding of neural networks for representation learning from the SDR perspective. Let $x\in\real^p$ represent the covariates and $y\in\real$ represent the response. Consider the following regression model 
\begin{equation} 
y=f_0(B_0^{\T}x)+\epsilon, 
\label{Eq: main_model} 
\end{equation} 
where $B_0\in\real^{p\times d}$ is a nonrandom matrix with $d\le p$, $f_0:\real^d\to\real$ is an unknown function, and $\epsilon$ is the noise such that $\E(\epsilon| x)=0$ and $\var(\epsilon| x)=\nu^2$ for some positive constant $\nu$. {{Intuitively, the semiparametric and potentially multi-index model \eqref{Eq: main_model} asserts that the core information for regression is encoded in the low-dimensional linear representation $B_0^{\T}x$. In the literature of SDR, model \eqref{Eq: main_model} has been extensively studied. Based on model \eqref{Eq: main_model} , \citet{cook2002dimension} proposed the objective of sufficient mean dimension reduction as 
\begin{equation} 
y\indep \E(y| x) | B^{\T}x
\label{Eq: CMS} 
\end{equation} 
for some matrix $B$, where $\indep$ means statistical independence.  Denote the column space spanned by $B$ as $\Pi_{B}$, which is commonly referred to as the mean dimension-reduction subspace. It is evident that the matrix $B$ satisfying \eqref{Eq: CMS} is far away from uniqueness. Hence, we focus on the intersection of all possible mean dimension-reduction subspace, which is itself a mean dimension-reduction subspace under mild conditions (e.g., when the domain of $x$ is open and convex; see \citet{cook2002dimension}), and term it the central mean subspace. Agreeing to condition \eqref{Eq: CMS}, $B_0$ defined in model \eqref{Eq: main_model} yields the central mean subspace, denoted as $\Pi_{B_0}$, under certain assumption. Statistical estimation and inference about the central mean subspace is the primary goal of SDR. Popular statistical methods for recovering central mean subspace include ordinary least squares \citep{li1989regression}, principal Hessian directions \citep{li1992principal}, minimum average variance estimation \citep{xia2002adaptive}, semiparametric approach \citep{ma2013efficient}, generalized kernel-based dimension reduction \citep{fukumizu2014gradient}, and many others.}} Although numerous studies have demonstrated the ability of neural networks to approximate complex functions \citep{hornik1989multilayer, barron1993universal, yarotsky2017error, shen2021neural} and to adapt low-dimensional structures \citep{bach2017breaking, bauer2019on, schmidt2020nonparametric, abbe2022merged,jiao2023deep, troiani2024fundamental}, it is of subsequent interest to investigate whether neural networks are capable of correctly identifying the intrinsic structure encapsulated in $B_0$, thereby deepening the interpretation of neural networks.

Our study was inspired by an observation that the weight matrix in the first layer, i.e., $W_1$, could accurately detect the presence of $B_0$ in a toy data set, with the rank regularization. Specifically, consider the toy model $y=(B_0^{\T}x)^3+\epsilon$ where $x\sim \mathrm{Uniform}([0, 1]^{10})$, $B_0=(1, -2, 0, \ldots, 0)^{\T}$ and $\epsilon\sim \mathrm{Normal}(0, 0.1^2)$. We trained neural networks using the least squares loss with $W_1=W_{11}W_{12}$ where $W_{11}\in\real^{10\times q}$ and $W_{12}\in\real^{q\times 64}$ for $q=1, \ldots , 10$. Evidently, the rank of $W_1$ did not exceed $q$. It was then observed that for each $q$, (i) $B_0$ was closely contained within the column space of $W_{11}$, as the absolute cosine similarity between $B_0$ and its projection on $\Pi_{W_{11}}$ was close to 1, and (ii) the leading eigenvector of $W_1W_1^{\T}$ closely aligned with $B_0$ (see Figure \ref{Fig: motivation}). This observation indicates that the first layer of the neural network may potentially discover the underlying low-dimensional intrinsic structure. 

\begin{figure}[b]
    \centering
    \includegraphics[width=5.5cm]{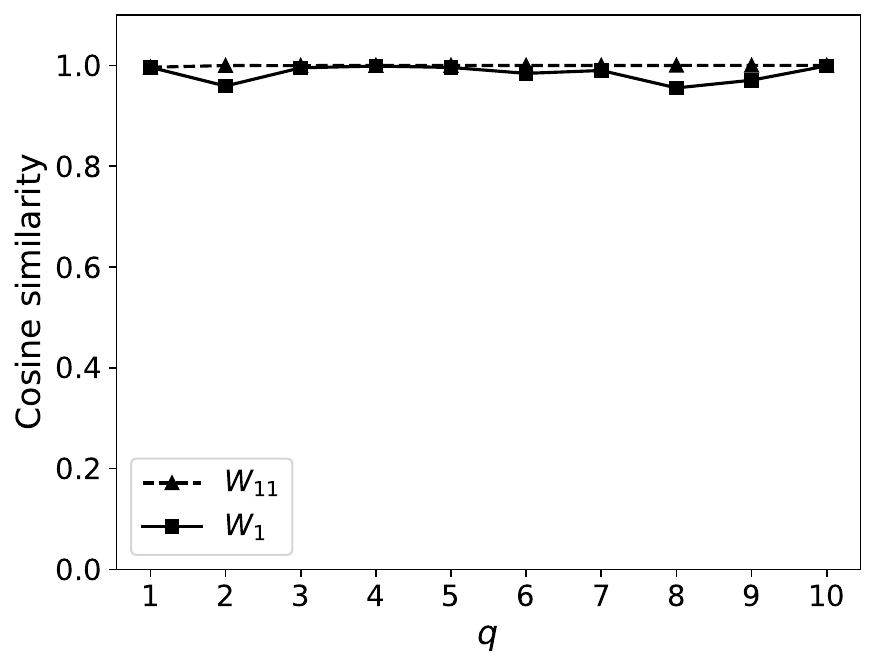}
    \caption{Absolute cosine similarity between (i) $B_0$ and its projection on $\Pi_{W_{11}}$ (the dot line with triangle marks), (ii) $B_0$ and the leading eigenvector of $W_1W_1^{\T}$ (the solid line with square marks).}
    \label{Fig: motivation}
\end{figure}

It is important to note that the application of neural networks for estimating the central mean subspace based on model \eqref{Eq: main_model} was previously explored by \citet{kapla2022fusing} through a two-stage method. The first stage focused on obtaining a preliminary estimation of $\Pi_{B_0}$, which subsequently served as an initial point for the joint estimation of $\Pi_{B_0}$ and $f_0$ in the second stage. Given our toy example, however, it is prudent to critically evaluate the necessity of the first stage. Furthermore, their work lacks a comprehensive theoretical guarantee. Additionally, another related work conducted by \citet{liang2022nonlinear} concentrated on seeking nonlinear sufficient representations. In contrast, we focus more on revealing the fundamental nature of neural networks. 

We in this paper show that, with suitable rank regularization, the first layer of a feedforward neural network conducts SDR in a regression task, wherein $d(W_{11}, B_0)\to 0$ in probability for certain distance metric $d(\cdot, \cdot)$. Furthermore, numerical experiments provide empirical evidences supporting this result, while demonstrating the efficiency of neural networks in addressing the issue of SDR.

Throughout this paper, we use $\|v\|_2$ to represent the Euclidean norm of a vector $v$. For a matrix $A$, $\|A\|_F$ is the Frobenius norm of $A$, $\pi_A=A(A^{\T}A)^-A^{\T}$ denotes the projection matrix of $A$ where $A^-$ is the generalized inverse of $A$, and $\Pi_A$ stands for the linear space spanned by the columns of $A$. For a measurable function $f: \mathcal{X}\to \real$, $\|f\|_{L^2(\mu)}$ represents the $L^2$ norm of $f$ with respect of a given probability measure $\mu$, and $\|f\|_{L^{\infty}(\mathcal{X})}=\sup_{x\in\mathcal{X}}|f(x)|$ represents the supreme norm of $f$ over a set $\mathcal{X}$. $\mathcal{B}(\mathcal{X})$ is the unit ball induced by $\mathcal{X}$, such that $\mathcal{B}(\mathcal{X})\subset\mathcal{X}$ and $\|v\|_2\le 1$ for any $v\in\mathcal{X}$.

\section{Theoretical Justificatitons}

\subsection{Population-Level Unbiasedness}

Suppose the true intrinsic dimension $d$ defined in model \eqref{Eq: main_model} is known and the covarites $x\in\mathcal{B}([0, 1]^p)$. As $B_0$ is not identifiable in (\ref{Eq: main_model}) and (\ref{Eq: CMS}) , it is assumed without loss of generality that $B_0^{\T}B_0=I_d$ where $I_d$ is the identity matrix with $d$ rows. By defining $\Psi_d=\{B\in\real^{p\times d}: B^{\T}B=I_d\}$, we have $B_0\in\Psi_d$. In this paper, we consider the following neural network function class
$$
\begin{aligned}
&\mathcal{F}_{\mathcal{L}, \mathcal{M}, \mathcal{S}, \mathcal{R}}=\Big\{ \\
& f(x)=\phi_L\circ\sigma_{L-1}\circ\phi_{L-1}\circ\cdots \circ\sigma_0\circ\phi_0(B^{\T}x): \\
& B\in\Psi_d, \phi_i(z)=W_i^{\T}z+b_i, W_i\in\real^{d_i\times d_{i+1}}, i=0, \ldots, L, \\
&\text{with } L=\mathcal{L}, \max_{i=1, \dots, L}d_i=\mathcal{M}, \sum_{i=0}^L(d_i+1)d_{i+1}=\mathcal{S}, \\
& \|f\|_{L^{\infty}(\mathcal{B}([0, 1]^p))}\le \mathcal{R}
\Big\}.
\end{aligned}
$$
The activation functions $\sigma_i (i=0, \ldots, L-1)$ utilized are the rectified linear units, i.e., $\sigma_i(x)=\max(x, 0)$. We emphasize that $\mathcal{F}_{\mathcal{L}, \mathcal{M}, \mathcal{S}, \mathcal{R}}$ incorporates rank regularization of $d$ in the first layer. For arbitrary $f\in\mathcal{F}_{\mathcal{L}, \mathcal{M}, \mathcal{S}, \mathcal{R}}$, we use $\mathcal{T}(f)$ to represent the component $B\in\Psi_d$ in the first layer of $f$.

For a regression task, the theoretical study is highly related to the smoothness of  underlying conditional mean function \citep{yang1999information}. Here, we introduce the following assumptions of model \eqref{Eq: main_model}.

\begin{assumption}[Smoothness]
$f_0$ is a H\"older continuous function of order $\alpha\in (0, 1]$ with constant $\lambda$, i.e., $|f_0(x)-f_0(z)|\le \lambda\|x-z\|_2^{\alpha}$ for any $x, z\in [0, 1]^d$. Additionally, $\|f_0\|_{L^{\infty}([0, 1]^d)}\le \mathcal{R}_0$ for some constant $\mathcal{R}_0\ge 1$.
\label{Asm: f0_smoothness}
\end{assumption}

\begin{assumption}[Sharpness]
For any scalar $\delta>0$ and $B\in\Psi_d$, $\|\pi_B-\pi_{B_0}\|_F>\delta$ implies $\E\{\var[f_0(B_0^{\T} x)| B^{\T} x]\}>M(\delta)$ for some $M(\delta)>0$.
\label{Asm: f0_sharpness}
\end{assumption}

\begin{assumption}
$y$ is sub-exponentially distributed such that there exists $\tau>0$ satisfying $\E[\exp(\tau|y|)]<\infty$.
\label{Asm: y_subexp}
\end{assumption}

Assumption \ref{Asm: f0_smoothness} is a technical condition to study the approximation capability of neural networks \citep{shen2019deep}. Alternatively, other functional spaces, such as the Sobolev space, can also be employed for this purpose \citep{abdeljawad2022approximations, shen2022approximation}. Furthermore, Assumption \ref{Asm: f0_sharpness} establishes a restriction on the sharpness of $f_0$. Consider the case that $f_0$ is solely a constant function as zero. Then, a trivial neural network by setting all the parameters except $B$ to zero perfectly fits $f_0$, regardless of the value of $B$. With Assumption \ref{Asm: f0_sharpness}, it becomes difficult to accurately capture the overall behavior of $f_0(B_0^{\T} x)$ using a biased $B$. In other words, $B_0^{\T}x$ is sufficient and necessary for recovering $\E(y| x)$, i.e., $\Pi_{B_0}$ is the central mean subspace. Similar condition was also adopted in Theorem 4.2 of \citet{li2009dimension} to distinguish sufficient directions $B_0$ from other $B$. Assumption \ref{Asm: y_subexp} is a commonly used condition for applying empirical process tools and concentration inequalities \citep{van2000asymptotic, zhu2022robust}. 

\begin{table*}[t]
\centering
\begin{tabular}{lccrrrrrr}
\toprule
 &  &  & NN & MAVE & GKDR & SIR & SAVE & PHD \\ \midrule
 & $(n, p)=(100, 10)$ & mean & 0.135 & 0.160 & 0.388 & 0.897 & 0.315 & 0.623 \\
 &  & std & 0.064 & 0.056 & 0.126 & 0.186 & 0.082 & 0.150 \\
Setting 1 & $(n, p)=(200, 30)$ & mean & 0.276 & 0.214 & 1.012 & 0.946 & 0.298 & 0.807 \\
 &  & std & 0.274 & 0.051 & 0.299 & 0.113 & 0.049 & 0.107 \\
 & $(n, p)=(300, 30)$ & mean & 0.120 & 0.130 & 0.542 & 0.900 & 0.337 & 0.709 \\ 
 &  & std & 0.038 & 0.029 & 0.166 & 0.136 & 0.063 & 0.108 \\ \midrule
 & $\sigma=0.1$ & mean & 0.296 & 0.730 & 1.130 & 0.665 & 0.319 & 1.550 \\
 &  & std & 0.126 & 0.312 & 0.271 & 0.166 & 0.076 & 0.126 \\
Setting 2 & $\sigma=0.2$ & mean & 0.628 & 0.899 & 1.155 & 0.705 & 0.338 & 1.526 \\
 &  & std & 0.269 & 0.329 & 0.237 & 0.149 & 0.074 & 0.138 \\
 & $\sigma=0.5$ & mean & 1.197 & 1.187 & 1.278 & 0.830 & 0.367 & 1.567 \\
 &  & std & 0.213 & 0.204 & 0.200 & 0.169 & 0.072 & 0.131 \\ \midrule
 & $n=200$ & mean & 0.639 & 1.246 & 1.759 & 1.669 & 1.728 & 1.740 \\ 
 &  & std & 0.418 & 0.289 & 0.149 & 0.235 & 0.136 & 0.221 \\
Setting 3 & $n=500$ & mean & 0.248 & 1.076 & 1.752 & 1.650 & 1.683 & 1.721 \\
 &  & std & 0.242 & 0.331 & 0.125 & 0.271 & 0.201 & 0.228 \\
 & $n=1000$ & mean & 0.075 & 0.924 & 0.554 & 1.652 & 1.678 & 1.737 \\
 &  & std & 0.081 & 0.382 & 0.371 & 0.259 & 0.245 & 0.191 \\ \midrule
 & $n=1000, d=4$ & mean & 0.127 & 0.293 & 0.368 & 1.363 & 0.429 & 0.960 \\
 &  & std & 0.161 & 0.050 & 0.079 & 0.167 & 0.079 & 0.178 \\
Setting 4 & $n=1500, d=6$ & mean & 0.144 & 0.415 & 0.386 & 1.530 & 0.467 & 0.902 \\
 &  & std & 0.067 & 0.076 & 0.077 & 0.153 & 0.082 & 0.206 \\
 & $n=2000, d=8$ & mean & 0.140 & 0.360 & 0.344 & 1.410 & 0.364 & 0.735 \\
 &  & std & 0.055 & 0.084 & 0.010 & 0.163 & 0.065 & 0.175 \\
\bottomrule
\end{tabular}
\caption{The results of average and standard deviation of $\|\pi_{\hat{B}}-\pi_{B_0}\|_F$ on 100 replicates across different methods. NN represents the neural network-based method.}
\label{Tab: simulation}
\end{table*}

\begin{theorem}
    Suppose that Assumptions \ref{Asm: f0_smoothness} and \ref{Asm: f0_sharpness} hold. Let 
    $
    f^*=\mathrm{argmin}_{f\in\mathcal{F}_{\mathcal{L}, \mathcal{M}, \mathcal{S}, \mathcal{R}}}\E[y-f(x)]^2$, then
    $$\Pi_{\mathcal{T}(f^*)}=\Pi_{B_0},$$
     provided that $\mathcal{R}$ is sufficiently large, and $\mathcal{L}$ and $\mathcal{M}$ tend to infinity. 
     \label{Thm: population_minimum}
\end{theorem}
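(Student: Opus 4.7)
The plan is to decompose the squared loss into two pieces, one that depends only on the first-layer direction and one that is pure approximation error, and then to play Assumption \ref{Asm: f0_sharpness} against the universal approximation capability of ReLU networks to rule out any $\mathcal{T}(f^*)$ whose column space differs from $\Pi_{B_0}$. Using $\E(\epsilon| x) = 0$ I would first write $\E[y - f(x)]^2 = \nu^2 + \E[f_0(B_0^{\T} x) - f(x)]^2$ for any $f$. Writing $f(x) = g(B^{\T} x)$ with $B = \mathcal{T}(f) \in \Psi_d$ and conditioning on $B^{\T} x$ then yields
\begin{equation*}
\E[f_0(B_0^{\T} x) - g(B^{\T} x)]^2 = \E\{\var[f_0(B_0^{\T} x) | B^{\T} x]\} + \E\{[\E(f_0(B_0^{\T} x) | B^{\T} x) - g(B^{\T} x)]^2\}.
\end{equation*}
The first summand is an irreducible index-mismatch term that depends only on $B$, while the second is a nonnegative approximation error that depends on $g$.

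Suppose for contradiction that $\delta_0 := \|\pi_{\mathcal{T}(f^*)} - \pi_{B_0}\|_F > 0$. Assumption \ref{Asm: f0_sharpness} then forces
\begin{equation*}
\E[y - f^*(x)]^2 \ge \nu^2 + M(\delta_0).
\end{equation*}
On the other hand, for the candidate first-layer matrix $B_0$ the index-mismatch term vanishes identically, so the infimum of $\E[y - f(x)]^2$ over $\{f \in \mathcal{F}_{\mathcal{L}, \mathcal{M}, \mathcal{S}, \mathcal{R}} : \mathcal{T}(f) = B_0\}$ equals $\nu^2$ plus the best $L^2$ approximation of $f_0$ by the post-first-layer ReLU subnetwork on the compact set $\{B_0^{\T} x : x \in \mathcal{B}([0, 1]^p)\}$. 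By Assumption \ref{Asm: f0_smoothness}, $f_0$ is H\"older with $\|f_0\|_{L^{\infty}} \le \mathcal{R}_0$, so as soon as $\mathcal{R} \ge \mathcal{R}_0$ I can invoke quantitative ReLU approximation theorems such as those of \citet{yarotsky2017error} and \citet{shen2019deep} to make the sup-norm (and hence $L^2$) error vanish as $\mathcal{L}, \mathcal{M} \to \infty$. Taking $\mathcal{L}$ and $\mathcal{M}$ large enough to push this error strictly below $M(\delta_0)$ gives
\begin{equation*}
\inf_{f \in \mathcal{F}_{\mathcal{L}, \mathcal{M}, \mathcal{S}, \mathcal{R}}} \E[y - f(x)]^2 < \nu^2 + M(\delta_0) \le \E[y - f^*(x)]^2,
\end{equation*}
contradicting the optimality of $f^*$. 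Hence $\delta_0 = 0$ and $\Pi_{\mathcal{T}(f^*)} = \Pi_{B_0}$.

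The main obstacle I anticipate is the approximation step, which must show that the size budget of $\mathcal{F}_{\mathcal{L}, \mathcal{M}, \mathcal{S}, \mathcal{R}}$ is simultaneously rich enough to approximate every admissible $f_0$ on the relevant compact domain and compatible with pinning the first layer to the orthonormal $B_0$. Once the first layer is fixed, the remaining layers face only architectural and sup-norm constraints, so a careful bookkeeping of depth $\mathcal{L}$, width $\mathcal{M}$, total size $\mathcal{S}$, and the clipping needed to enforce $\|f\|_{L^{\infty}} \le \mathcal{R}$ in existing ReLU approximation results should suffice. Note that Assumption \ref{Asm: y_subexp} plays no role at the population level and is reserved for the later sample-level consistency analysis.
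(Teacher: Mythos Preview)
Your proposal is correct and follows essentially the same route as the paper: both use Assumption~\ref{Asm: f0_sharpness} to lower-bound the risk of any $f$ with $\pi_{\mathcal{T}(f)}\ne\pi_{B_0}$ by $\nu^2+M(\delta)$, and then invoke ReLU approximation (the paper via Lemma~\ref{Lem: nn_approx}, i.e., \citet{shen2019deep}) with the first layer pinned at $B_0$ to produce a competitor with risk strictly below that bound. Your explicit conditional-variance decomposition $\E[f_0(B_0^{\T}x)-g(B^{\T}x)]^2=\E\{\var[f_0(B_0^{\T}x)\,|\,B^{\T}x]\}+\E\{[\E(f_0(B_0^{\T}x)\,|\,B^{\T}x)-g(B^{\T}x)]^2\}$ is a slightly cleaner way to obtain the same lower bound the paper reaches through $\E[Z^2]\ge\var(Z)\ge\E\{\var(Z\,|\,B^{\T}x)\}$.
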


Theorem \ref{Thm: population_minimum} builds a bridge connecting neural networks and SDR.  It demonstrates that neural networks indeed achieve representation learning as the first layer of the optimal neural network perfectly reaches the target of SDR at population level. Theorem \ref{Thm: population_minimum} also inspires us to perform SDR based on neural networks with a minor adjustment for the first layer. The detailed proof of Theorem \ref{Thm: population_minimum} can be found in Section Proofs.

\subsection{Sample Estimation Consistency}
We now investigate the theoretical property of the neural network-based sample-level estimator for SDR. Given sample observations $\mathcal{D}_n=\{(X_1, Y_1), \ldots, (X_n, Y_n)\}$, where $(X_i, Y_i)$ is an independent copy of $(x, y)$ for $i=1, \ldots, n$, the commonly used least squares loss is adopted, i.e.,
$$
L_n(f)=\frac 1n\sum_{i=1}^n[Y_i-f(X_i)]^2.
$$ 
Denote the optimal neural network estimator at the sample level as
$$
\hat{f}_n=\argmin{f\in \mathcal{F}_{\mathcal{L}, \mathcal{M}, \mathcal{S}, \mathcal{R}}}L_n(f).
$$ 
$\mathcal{T}(\hat f_n)$ is then the sample estimator approximately spanning the central mean subspace.   

To examine the closeness between $\Pi_{\mathcal{T}(\hat{f}_n)}$ and $\Pi_{B_0}$, we define the distance metric $d(\cdot, \cdot)$ as
$$
d(B, B_0)=\min_{Q\in \mathcal{Q}}\|B_0-B Q\|_2,
$$
where $B\in\Psi_d$ and $\mathcal{Q}$ is the collection of all orthogonal matrices in $\real^{d\times d}$. We see that $d(B, B_0)=0$ if and only if $\Pi_{B}=\Pi_{B_0}$. And we make the following assumption essentially another view of Assumption \ref{Asm: f0_sharpness}. 

\begin{assumption}
For any positive scalar $\delta$, $d(B, B_0)>\delta$ implies $\E\{\var[f_0(B_0^{\T} x)| B^{\T} x]\}>M_1(\delta)$ for some $M_1(\delta)>0$. 
\label{Asm: metric_sharpness}
\end{assumption}

\begin{theorem}
Suppose the Assumptions \ref{Asm: f0_smoothness}, \ref{Asm: y_subexp} and \ref{Asm: metric_sharpness} hold. Then we have
$$
d(\mathcal{T}(\hat{f}_n), B_0)\stackrel{P}\longrightarrow 0,
$$
when the depth $\mathcal{L}=O(n^{d/(2d+8\alpha)})$, the width $\mathcal{M}=O(1)$ and $\mathcal{R}\ge \mathcal{R}_0$.
\label{Thm: convergence}
\end{theorem}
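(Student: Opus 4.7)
The plan is to run a standard excess-risk decomposition, reduce the statement to the convergence $\mathcal{E}(\hat f_n)\to 0$ in probability, and then invoke Assumption \ref{Asm: metric_sharpness} to translate this into $d(\hat B,B_0)\to 0$, where $\hat B=\mathcal{T}(\hat f_n)$. Writing $\eta(x)=\E(y\mid x)=f_0(B_0^{\T}x)$ and $\mathcal{E}(f)=\E[f(x)-\eta(x)]^2$, the optimality of $\hat f_n$ yields, for any $\bar f\in\mathcal{F}_{\mathcal{L},\mathcal{M},\mathcal{S},\mathcal{R}}$,
$$\mathcal{E}(\hat f_n)\le \mathcal{E}(\bar f)+2\sup_{f\in\mathcal{F}_{\mathcal{L},\mathcal{M},\mathcal{S},\mathcal{R}}}\bigl|(L_n-\E)[y-f(x)]^2\bigr|,$$
so the core of the argument is to make both terms vanish under the prescribed scaling of $(\mathcal{L},\mathcal{M})$.

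For the approximation term I would take $\bar f(x)=g(B_0^{\T}x)$, where $g$ is a constant-width ReLU subnetwork of depth proportional to $\mathcal{L}$ approximating the Hölder function $f_0$ on $[0,1]^d$. Since $B_0\in\Psi_d$ and $\mathcal{R}\ge\mathcal{R}_0$ by hypothesis, $\bar f$ belongs to $\mathcal{F}_{\mathcal{L},\mathcal{M},\mathcal{S},\mathcal{R}}$. Known deep-ReLU Hölder-approximation results (e.g., Shen and collaborators, cited in the paper) yield $\|g-f_0\|_{L^\infty([0,1]^d)}=O(\mathcal{L}^{-2\alpha/d})$, so $\mathcal{E}(\bar f)=O(\mathcal{L}^{-4\alpha/d})=o(1)$ under $\mathcal{L}=n^{d/(2d+8\alpha)}$.

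For the stochastic term I would handle the unbounded response (Assumption \ref{Asm: y_subexp}) by truncation at a level $T_n=c\log n$: the sub-exponential tail bound $\pr(|y|>T_n)\le \exp(-\tau T_n)$ makes the tail contribution to the empirical process negligible, while on the truncated event $[y-f(x)]^2$ is uniformly bounded by $(T_n+\mathcal{R})^2$. A standard symmetrization/chaining argument then produces
$$\sup_{f\in\mathcal{F}_{\mathcal{L},\mathcal{M},\mathcal{S},\mathcal{R}}}\bigl|(L_n-\E)[y-f(x)]^2\bigr|=O_P\bigl((T_n+\mathcal{R})^2\sqrt{\mathrm{Pdim}(\mathcal{F})\log n/n}\bigr),$$
where $\mathrm{Pdim}(\mathcal{F})=O(\mathcal{S}\mathcal{L}\log\mathcal{S})=O(\mathcal{L}^2\log\mathcal{L})$ for constant-width ReLU networks. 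The orthogonality constraint $B\in\Psi_d$ does not worsen this estimate: $\Psi_d$ is compact and $f$ depends Lipschitz-continuously on $B$ over $[0,1]^p$, so the covering number of $\mathcal{F}_{\mathcal{L},\mathcal{M},\mathcal{S},\mathcal{R}}$ inherits only a constant factor from the Stiefel component. Plugging in $\mathcal{L}=n^{d/(2d+8\alpha)}$, this bound is $o_P(1)$.

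To close the argument, observe that $\hat f_n(x)=\hat g(\hat B^{\T}x)$ is measurable with respect to $\hat B^{\T}x$, so the $L^2$-projection inequality gives
$$\mathcal{E}(\hat f_n)\ge \E\{\var[f_0(B_0^{\T}x)\mid \hat B^{\T}x]\}.$$
If $d(\hat B,B_0)>\delta$ on a set of probability bounded away from zero, Assumption \ref{Asm: metric_sharpness} forces the right-hand side to exceed $M_1(\delta)>0$ on that set, contradicting $\mathcal{E}(\hat f_n)=o_P(1)$; hence $d(\hat B,B_0)\stackrel{P}\longrightarrow 0$. I expect the main obstacle to be the stochastic term: the sub-exponential tail of $y$ must be balanced simultaneously against the growing parameter count and the Stiefel component $B$, so one has to verify carefully that the truncation level $T_n$, the pseudo-dimension bound, and the specific scaling $\mathcal{L}=n^{d/(2d+8\alpha)}$ fit together precisely so that approximation and stochastic terms both vanish.
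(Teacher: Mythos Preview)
Your proposal is correct and follows essentially the same overall architecture as the paper's proof: an excess-risk decomposition into approximation error (handled via the Shen et al.\ deep-ReLU H\"older bound with $\mathcal{M}=O(1)$, $\mathcal{L}=O(n^{d/(2d+8\alpha)})$) and stochastic error (handled by truncating $y$ at a polylog level to exploit Assumption~\ref{Asm: y_subexp}, then using the pseudo-dimension bound $O(\mathcal{S}\mathcal{L}\log\mathcal{S})$). The rates you obtain match the paper's Lemmas~\ref{Lem: function_complexity} and~\ref{Lem: excess_risk}.

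The one genuine difference is the final identifiability step. The paper introduces an auxiliary metric
\[
d_1(f,m_0)=\min_{Q\in\mathcal{Q}}\bigl(\|B_0-\mathcal{T}(f)Q\|_2\vee\|f_0\circ Q-f\circ\mathcal{T}(f)\|_{L^2(\mu)}\bigr),
\]
verifies the well-separation condition $\inf_{d_1(f,m_0)\ge\delta}R(f)>0$ by a case split on whether $d(\mathcal{T}(f),B_0)$ is small (using the H\"older continuity of $f_0$) or large (using Assumption~\ref{Asm: metric_sharpness}), and then invokes van der Vaart's M-estimator consistency theorem. Your argument bypasses this machinery entirely: you use the one-line projection inequality $\mathcal{E}(\hat f_n)\ge \E\{\var[f_0(B_0^{\T}x)\mid \hat B^{\T}x]\}$ (valid because $\hat f_n$ is $\hat B^{\T}x$-measurable) and appeal to Assumption~\ref{Asm: metric_sharpness} directly. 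This is more elementary and avoids the compound metric $d_1$; the paper's route, on the other hand, also delivers $\|f_0\circ Q-\hat f_n\circ\mathcal{T}(\hat f_n)\|_{L^2(\mu)}\to 0$ as a byproduct, which your shortcut does not.
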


Theorem \ref{Thm: convergence} confirms that $\Pi_{\mathcal{T}(\hat{f}_n)}$ converges to $\Pi_{B_0}$ in probability. As a result, under the least squares loss, the neural networks, with appropriate rank regularization, provide a consistent estimator of the central mean subspace. Therefore, it is promising to adopt the neural network function class $\mathcal{F}_{\mathcal{L}, \mathcal{M}, \mathcal{S}, \mathcal{R}}$ to address sufficient mean dimension reduction problems. More importantly, this approach offers several advantages over existing methods. Unlike SIR \citep{li1991sliced}, SAVE \citep{cook1991sliced} and PHD \citep{li1992principal}, our method does not impose stringent probabilistic distributional conditions on $x$, such as linearity, constant variance and normality assumptions. Compared to MAVE \cite{xia2002adaptive}, our method adopts the powerful neural networks to estimate the link function, thereby working similar or better than classical nonparametric tools based on first-order Taylor expansion. We present the proof of Theorem \ref{Thm: convergence} in Section Proofs.

The results illustrated in Theorem \ref{Thm: population_minimum} and \ref{Thm: convergence} are contingent on the availability of true intrinsic dimension $d$. In the case where $d$ is unknown, a natural modification for $\mathcal{F}_{\mathcal{L}, \mathcal{M}, \mathcal{S}, \mathcal{R}}$ is to set $B\in\real^{p\times p}$ without rank regularization. Under Assumptions \ref{Asm: f0_smoothness} and \ref{Asm: f0_sharpness}, in this scenario, the optimal $f^*$ at the population level still ensures that $\Pi_{\mathcal{T}(f^*)}$ encompasses $\Pi_{B_0}$, where the sharpness of $f_0$ plays a crucial role; see the Supplementary Material for more details. 

\section{Simulation Study}

From the perspective of numerical experiments, we utilized simulated data sets to demonstrate that (i) the column space of $\mathcal{T}(\hat{f}_n)$ approached the central mean subspace $\Pi_{B_0}$ as sample size $n$ increased, (ii) the performance of neural network-based method in conducting SDR was comparable to classical methods. In some cases, the neural network-based method outperformed classical methods, particularly when the latent intrinsic dimension $d$ was large. Five commonly used methods, sliced inverse regression (SIR), sliced average variance estimation (SAVE), principal Hessian directions (PHD), minimum average variance estimation (MAVE) and generalized kernel dimension reduction (GKDR), were included for comparisons.

The neural network-based method was implemented in Python using PyTorch. Specifically, we utilized a linear layer without bias term to represent the matrix $B$ (recall that we suppose $d$ is known), which was further appended by a fully-connected feedforward neural network with number of neurons being $h-h/2-1$. Here, we set $h=64$ when the sample size was less than $1000$, and set $h=128$ when sample size was between 1000 and 2000. Overall, the neural network architecture was $p-d-h-h/2-1$. Code is available at \url{https://github.com/oaksword/DRNN}.

We considered the following four scenarios, with two of them attaining small $d=1, 2$ and the rest of $d\ge 3$.

Setting 1: $y=x_1^4+\epsilon$ where $x\sim \mathrm{Normal}(0, I_p)$ and $\epsilon\sim \mathrm{t}_5$.

Setting 2: $y=\log(x_1+x_1x_2)+\epsilon$ where $x\sim \mathrm{Uniform}([0, 1]^{10})$ and $\epsilon\sim \mathrm{Normal}(0, \sigma^2)$.

Setting 3: $y=(1+\beta_1^{\T} x)^2\exp(\beta_2^{\T} x)+5\cdot 1(\beta_3^{\T} x>0)+\epsilon$ where $x\sim \mathrm{Uniform}([-1,1]^6)$, $\epsilon\sim \chi^2_2-2$, $\beta_1=(-2, -1, 0, 1, 2, 3)^{\top}$, $\beta_2=1_6$ and $\beta_3=(1, -1, 1, -1, 1, -1)^{\top}$.

Setting 4:
$$
y=\sum_{k=1}^d\frac{e^{x_i}}{2+\sin(\pi x_i)}+\epsilon,
$$ 
where $\epsilon\sim \mathrm{Normal}(0, 0.1^2)$, 
$$
x\sim \mathrm{Normal}\left(1_{10}, I_{10}-1_{10}1_{10}^{\T}/20\right).
$$

In setting 1, we tested the combinations of $(n, p)=(100, 10), (200, 30)$ and $(300, 30)$, where $n$ was the sample size. In setting 2, we fixed $n=200, p=10$ and set $\sigma=0.1, 0.2, 0.5$. In setting 3, we fixed $p=6$ and tested $n=200, 500, 1000$. In setting 4, we fixed $p=10$ and tested $(n, d)=(1000, 4), (1500, 6)$ and $(2000, 8)$. Settings 1, 2, 4 were equipped with continuous regression functions, and setting 3 involved discontinuity. We set the number of iterations of our method to 1000 for settings 1 and 2, and increased the iterations to 2000 and 4000 for settings 3 and 4, due to their high complexity.

To evaluate the performance of each method, we calculated the distance metric $\|\pi_{\hat{B}}-\pi_{B_0}\|_F$ where $\hat{B}$ represented the estimate of $B_0$. Particularly, for our neural network-based method, $\hat{B}=\mathcal{T}(\hat{f}_n)$. Smaller value of this metric indicated better performance. We ran 100 replicates for each setting. The results are displayed in Table \ref{Tab: simulation} and Figure \ref{Fig: simulation}.

In simple cases (settings 1 and 2), the neural network-based method showed similar performance to classical methods, with sliced average variance estimation being the most effective method in setting 2. However, complex settings demonstrated the significant superiority of neural network-based method compared to other methods. Specifically, in setting 3, as the sample size $n$ increased, the metric $\|\pi_{\hat{B}}-\pi_{B_0}\|_F$ decreased rapidly, indicating the convergence of $\Pi_{\hat{B}}$ to $\Pi_{B_0}$. According to the results in setting 4, the neural network-based method was capable of handling higher-dimensional scenarios. In summary, numerical studies advocated neural networks as a powerful tool for SDR.

\begin{figure*}[ht]
\centering 
\includegraphics[width=14cm]{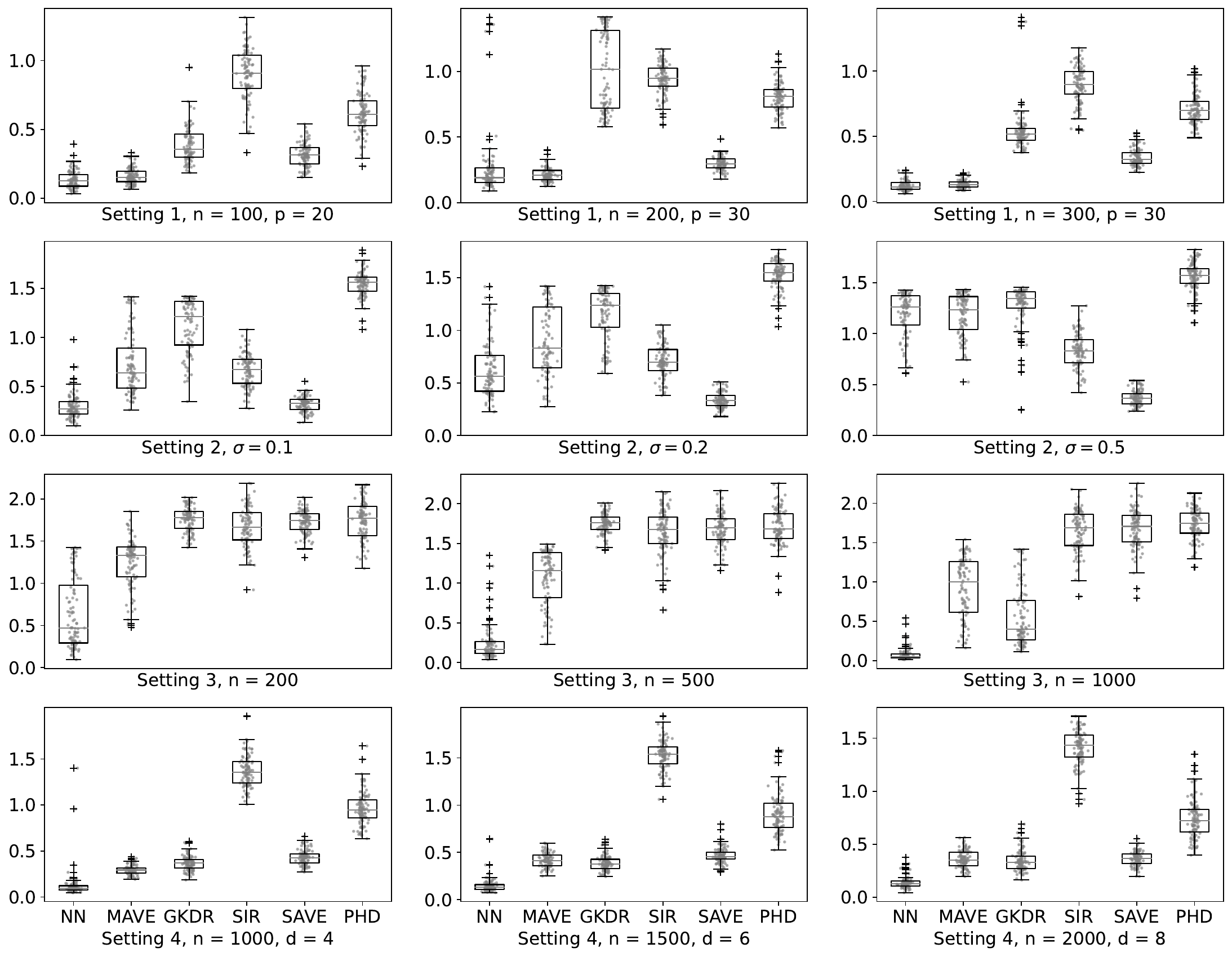}
\caption{Boxplots of $\|\pi_{\hat{B}}-\pi_{B_0}\|_F$ on 100 replicates across different methods. In each panel, six methods for SDR are included, among which NN represents the neural network-based method.}
\label{Fig: simulation}
\end{figure*}

\section{Real Data Analysis}

We further applied the neural network-based method involving rank regularization to a real regression task. In practice, the precise intrinsic dimension $d$ is unknown, and it is uncertain whether a low-dimensional structure exists. To address this issue, we used cross validation to determine an appropriate $d$ from the range of values $\{1, \ldots, p\}$. More specifically, the raw data set was divided into 80\% training data and 20\% testing data. The optimal value of $d$ was determined through cross validation on the training data. Subsequently, the final model was fit using the selected $d$ on the training data, and the mean squared error on the testing data was evaluated.

In order to reduce randomness, the aforementioned process was repeated 20 times and the resulting testing errors were recorded. Additionally, we conducted a comparative analysis between the neural network-based approach and alternative methods including vanilla neural network without rank regularization, SIR-based regression, SAVE-based regression, and MAVE-based regression. For the latter three techniques, we initially executed SDR to acquire the embedded data, followed by the utilization of a fully-connected neural network for predictive purpose. The optimal value for $d$ was also determined through cross validation.

We utilized a data set of weather records from Seoul, South Korea during the summer months from 2013 to 2017 \citep{cho2020comparative}, available at the UCI data set repository (bias correction of numerical prediction model temperature forecast).
This data set contained 7750 observations with 23 predictors and 2 responses, specifically the next-day maximum air temperature and next-day minimum air temperature. After excluding the variables for station and date, the data set was reduced to 21 predictors, which were further standardized using StandardScaler in scikit-learn package.

\begin{table}[t]
\begin{tabular}{lrrrrr}
\toprule
    & NN-RR & NN-VN & MAVE & SIR & SAVE  \\ \midrule
mean & 0.602 & 0.774 & 0.743 & 1.324 & 0.772  \\ 
std & 0.043 & 0.116 & 0.159 & 0.190 & 0.161  \\ 
$\bar{d}$ & 19.6 & --- & 19.25 & 19.6 & 20.6 \\ 
\bottomrule
\end{tabular}
\caption{The results of testing errors on 20 replicates across different methods. We report the average and standard deviation of testing errors, along with averaged optimal $d$ determined through cross validation. NN-RR, NN-VN, MAVE, SIR, SAVE represent the neural network-based method with rank regularization, vanilla neural network regression, MAVE-based regression, SIR-based regression, and SAVE-based regression, respectively.}
\label{Tab: real_data}
\end{table}

It is evident from Table \ref{Tab: real_data} that the neural network-based method with rank regularization outperformed other methods, demonstrating the effectiveness of the modification compared to the vanilla neural network, and the sound performance in reducing dimensions compared to other SDR methods. The presence of latent structures was partially supported by the averaged optimal $d$. It was possible that 19 or 20 combinations of raw predictors might be sufficient, as opposed to the original 21 predictors.

\section{Proofs}


\subsection{Proof of Theorem 1}

Define the vanilla neural network function class without rank regularization as
$$
\begin{aligned}
& \mathcal{F}_{\mathcal{L}, \mathcal{M}, \mathcal{S}, \mathcal{R}}^*=\Big\{ \\
& f(x)=\phi_L\circ\sigma_{L-1}\circ\phi_{L-1}\circ\cdots \circ\sigma_0\circ\phi_0(x):  \\
& \phi_i(z)=W_i^{\T}z+b_i, W_i\in\real^{d_i\times d_{i+1}}, i=0, \ldots, L, \\
& \text{with } L=\mathcal{L}, \max_{i=1, \dots, L}d_i=\mathcal{M}, \sum_{i=0}^L(d_i+1)d_{i+1}=\mathcal{S}, \\
& \|f\|_{L^{\infty}([0, 1]^d)}\le \mathcal{R}
\Big\}.
\end{aligned}
$$

\begin{lemma}
Suppose that $h$ is H\"older continuous with $\alpha\in (0, 1]$ and $\lambda>0$. Then, for any $\zeta>0$, there exists a function $g$ in neural network function class $\mathcal{F}_{\mathcal{L}, \mathcal{M}, \mathcal{S}, \mathcal{R}}^*$, with rectified linear unit activation and $\mathcal{L}, \mathcal{M}, \mathcal{S}, \mathcal{R}$ large enough, such that
$$
\|h-g\|_{L^{\infty}([0, 1]^d)}<\zeta.
$$
\label{Lem: nn_approx}
\end{lemma}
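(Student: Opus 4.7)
The plan is to reduce the statement to an off-the-shelf universal approximation theorem for ReLU networks applied to H\"older continuous functions, such as the rates established by Yarotsky (2017) or Shen, Yang and Zhang (2019), which are already cited in the introduction. Concretely, for every target accuracy $\zeta>0$ those results furnish a ReLU network $\tilde g$ of some explicit depth $\mathcal{L}_0(\zeta)$, width $\mathcal{M}_0(\zeta)$ and number of parameters $\mathcal{S}_0(\zeta)$ satisfying $\|h-\tilde g\|_{L^{\infty}([0,1]^d)}<\zeta/2$, where the dependence of $\mathcal{L}_0, \mathcal{M}_0, \mathcal{S}_0$ on $\zeta$ is governed by the H\"older exponent $\alpha$ and constant $\lambda$. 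Because H\"older continuity on the compact cube $[0,1]^d$ automatically yields $\|h\|_{L^{\infty}([0,1]^d)}\le |h(0)|+\lambda d^{\alpha/2}<\infty$, the bound on $\mathcal{R}$ needed for membership in $\mathcal{F}^{*}_{\mathcal{L},\mathcal{M},\mathcal{S},\mathcal{R}}$ is not a real obstruction.

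Next I would address the uniform bound. The simplest route is to pick $\mathcal{R}\ge \|h\|_{L^{\infty}([0,1]^d)}$ and replace $\tilde g$ by its clipping $g=\max\bigl(-\mathcal{R},\min(\mathcal{R},\tilde g)\bigr)$; this clipping can be implemented by two extra ReLU layers appended to $\tilde g$ via the identity $\min(a,b)=a-\sigma(a-b)$ and $\max(a,b)=b+\sigma(a-b)$, so $g$ is again a ReLU network, now in $\mathcal{F}^{*}_{\mathcal{L}_0+O(1),\mathcal{M}_0,\mathcal{S}_0+O(1),\mathcal{R}}$. Because pointwise clipping toward the range of $h$ can only decrease the sup-norm error, we still have $\|h-g\|_{L^{\infty}([0,1]^d)}<\zeta/2<\zeta$.

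Finally I would argue that the choice of ``large enough'' $\mathcal{L},\mathcal{M},\mathcal{S},\mathcal{R}$ in the statement is harmless: given the values $\mathcal{L}_0,\mathcal{M}_0,\mathcal{S}_0,\mathcal{R}_0$ produced above, any prescribed $\mathcal{L}\ge \mathcal{L}_0$, $\mathcal{M}\ge \mathcal{M}_0$, $\mathcal{S}\ge \mathcal{S}_0$, $\mathcal{R}\ge \mathcal{R}_0$ admits $g$ after a standard padding argument: extra depth is absorbed by composing with identity blocks of the form $x\mapsto \sigma(x)-\sigma(-x)$, extra width by zero-padding the hidden layers, and extra parameters by leaving the added weights at zero. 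The main technical point to verify carefully is this padding step, namely that one can simultaneously meet the depth, width and parameter-count constraints without perturbing the function $g$; the sharpness assumptions on the activation and the closure of $\mathcal{F}^{*}$ under such identity insertions make this routine. Once these pieces are assembled, $g\in\mathcal{F}^{*}_{\mathcal{L},\mathcal{M},\mathcal{S},\mathcal{R}}$ and $\|h-g\|_{L^{\infty}([0,1]^d)}<\zeta$, completing the proof.
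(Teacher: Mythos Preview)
Your proposal is correct and matches the paper's approach: the paper does not give an independent proof but simply notes that the lemma is a simplified version of Theorem~1.1 in \citet{shen2019deep}, which is precisely the off-the-shelf approximation result you invoke. The clipping and padding arguments you add are standard elaborations not spelled out in the paper, but they are sound and do not depart from the intended route.
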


We note that Lemma \ref{Lem: nn_approx} is a simplified version of Theorem 1.1 in \cite{shen2019deep}.

\begin{proof}[Proof of Theorem 1]
For $f\in\mathcal{F}_{\mathcal{L}, \mathcal{M}, \mathcal{S}, \mathcal{R}}$ such that $\pi_{\mathcal{T}(f)}\ne \pi_{B_0}$, under Assumption \ref{Asm: f0_sharpness}, there exists a scalar $t>0$ satisfying
$$
\E\left\{\var\left[f_0(B_0^{\T} x)| \mathcal{T}(f)^{\T} x\right]\right\}>t.
$$
Then, we have
$$
\begin{aligned}
\E[y-f(x)]^2 =& \E[\epsilon+f_0(B_0^{\T} x)-f(x)]^2 \\
=& \E(\epsilon^2)+\E[f_0(B_0^{\T} x)-f(x)]^2 \\
=& \nu^2+\var[f_0(B_0^{\T} x)-f(x)]\\
& +\E^2[f_0(B_0^{\T} x)-f(x)] \\
\ge & \nu^2+\var[f_0(B_0^{\T} x)-f(x)] \\
\ge & \nu^2+\E\{\var[f_0(B_0^{\T} x)| \mathcal{T}(f)^{\T} x]\}\\
>& \nu^2+t.
\end{aligned}
$$
On the other hand, for $\tilde{f}\in\mathcal{F}_{\mathcal{L}, \mathcal{M}, \mathcal{S}, \mathcal{R}}$ such that $\pi_{\mathcal{T}(\tilde{f})}=\pi_{B_0}$, there exists an orthogonal matrix $Q\in\real^{d\times d}$ such that $B_0=\mathcal{T}(\tilde{f})Q$. Hence, $\tilde{f}(x)=\tilde{f}\circ \mathcal{T}(\tilde{f})\circ Q(B_0^{\T} x)$ and $\tilde{f}\circ \mathcal{T}(\tilde{f})\circ Q$ is still a neural network function. Assumption \ref{Asm: f0_smoothness} and Lemma \ref{Lem: nn_approx} imply that there is a neural network function $g$ satisfying
$$
\|f_0(B_0^{\T} x)-g(B_0^{\T} x)\|_{L^{\infty}(\mathcal{B}([0, 1]^p))}<t^{1/2}/2,
$$
for sufficiently large $\mathcal{L}, \mathcal{M}, \mathcal{S}, \mathcal{R}$.
As a result, select $\tilde{f}$ such that $\tilde{f}\circ \mathcal{T}(\tilde{f})\circ Q=g$, and it follows that
$$
\E[y-\tilde{f}(x)]^2=\nu^2+\E[f_0(B_0^{\T} x)-g(B_0^{\T} x)]^2<\nu^2+t/4\ .
$$
Therefore, for any $f\in\mathcal{F}_{\mathcal{L}, \mathcal{M}, \mathcal{S}, \mathcal{R}}$ such that $\pi_{\mathcal{T}(f)}\ne \pi_{B_0}$, there exists a neural network function $\tilde{f}$, with the same rank regularization, satisfying 
$$
\E[y-\tilde{f}(x)]^2< \E[y-f(x)]^2.
$$
To conclude, $\pi_{\mathcal{T}(f^*)}=\pi_{B_0}$, which entails that $\Pi_{\mathcal{T}(f^*)}=\Pi_{B_0}$.
\end{proof}

\subsection{Proof of Theorem 2}

Let $L(f)=\E[y-f(x)]^2$. Without loss of generality, suppose $\mathcal{R}=\mathcal{R}_0$. We first present some useful lemmas.

\begin{lemma}
For sufficiently large $n$, it follows that
$$
\sup_{f\in\mathcal{F}_{\mathcal{L}, \mathcal{M}, \mathcal{S}, \mathcal{R}}}|L(f)-L_n(f)|=O_p\left(\sqrt{\frac{\mathcal{S}\mathcal{L}\log(\mathcal{S})(\log n)^5}{n}}\right).
$$
\label{Lem: function_complexity}
\end{lemma}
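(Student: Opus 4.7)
The plan is to control this empirical-process supremum via three main ingredients: (i) truncation of the sub-exponential response, (ii) symmetrization together with a contraction argument to pass from the loss class to $\mathcal{F}_{\mathcal{L},\mathcal{M},\mathcal{S},\mathcal{R}}$, and (iii) a covering-number bound for ReLU networks.

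First I would invoke Assumption \ref{Asm: y_subexp} to truncate $y$ at level $T_n = c_1\log n$ for a constant $c_1$ chosen so that $\pr(|y|>T_n)=o(n^{-2})$. On the event $\{\max_i|Y_i|\le T_n\}$, which holds with probability $1-o(n^{-1})$, the loss $(Y_i-f(X_i))^2$ is uniformly bounded by $(T_n+\mathcal{R})^2=O((\log n)^2)$; a parallel tail bound handles the population counterpart $L(f)$. This reduces the problem, modulo negligible error, to a uniform empirical-process bound over a class of bounded loss functions with a polylogarithmic envelope.

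Next, by a standard symmetrization argument and contraction inequality, since $z\mapsto(y-z)^2$ is Lipschitz in $z$ with constant $O(T_n+\mathcal{R})=O(\log n)$ on the truncated domain, the Rademacher complexity of the induced loss class is bounded by $O(\log n)$ times the Rademacher complexity of $\mathcal{F}_{\mathcal{L},\mathcal{M},\mathcal{S},\mathcal{R}}$ itself. Because the Stiefel constraint $B\in\Psi_d$ only restricts the admissible first-layer matrices, $\mathcal{F}_{\mathcal{L},\mathcal{M},\mathcal{S},\mathcal{R}}\subset\mathcal{F}^*_{\mathcal{L},\mathcal{M},\mathcal{S},\mathcal{R}}$, so it suffices to bound the complexity of the unconstrained class. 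For the latter I would appeal to the pseudo-dimension bound of Bartlett, Harvey, Liaw and Mehrabian for piecewise-linear networks, which yields $\mathrm{Pdim}(\mathcal{F}^*_{\mathcal{L},\mathcal{M},\mathcal{S},\mathcal{R}})=O(\mathcal{S}\mathcal{L}\log\mathcal{S})$, and then convert this via the Pollard/Haussler inequality into $\log N(\epsilon,\mathcal{F}^*,\|\cdot\|_{\infty})\lesssim \mathcal{S}\mathcal{L}\log\mathcal{S}\cdot\log(\mathcal{R}/\epsilon)$. Substituting into Dudley's entropy integral gives an expected Rademacher complexity of order $\sqrt{\mathcal{S}\mathcal{L}\log(\mathcal{S})\log n / n}$.

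Finally, to lift the in-expectation bound to the in-probability bound on the supremum, I would apply Talagrand's concentration inequality for bounded empirical processes, which costs another logarithmic factor through the envelope. Assembling the $O(\log n)$ Lipschitz factor from contraction, the $O((\log n)^2)$ envelope entering the variance term of Talagrand, and the $O(\log n)$ factor from Dudley produces exactly the $(\log n)^5$ inside the square root. The main obstacle is not any single estimate but the careful bookkeeping of logarithmic factors and the verification that each truncation-induced error is of strictly lower order than the target rate; a subtler point is that $\mathcal{F}_{\mathcal{L},\mathcal{M},\mathcal{S},\mathcal{R}}$ is defined with the nonconvex Stiefel constraint on $B$, but since this restricts rather than enlarges the class, the BHLM bound on the unconstrained network class remains a valid upper bound on the relevant complexity.
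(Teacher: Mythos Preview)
Your plan is sound and is essentially the standard route that the stated rate advertises: the factor $\mathcal{S}\mathcal{L}\log\mathcal{S}$ is precisely the Bartlett--Harvey--Liaw--Mehrabian pseudo-dimension bound for ReLU networks, and the polylogarithmic factors arise from truncating the sub-exponential response (Assumption~\ref{Asm: y_subexp}) and chaining, just as you describe. The paper itself defers the proof to the supplement and remarks that the rate ``may not be optimal, but is sufficient for deducing consistency,'' which is consistent with your approach; the only caveat is that your sentence ``$\mathcal{F}_{\mathcal{L},\mathcal{M},\mathcal{S},\mathcal{R}}\subset\mathcal{F}^*_{\mathcal{L},\mathcal{M},\mathcal{S},\mathcal{R}}$'' is not literally true as written (the two classes have different input dimensions), so when you write it up you should instead absorb $B^{\T}$ into $\phi_0$ and embed $\mathcal{F}_{\mathcal{L},\mathcal{M},\mathcal{S},\mathcal{R}}$ into an unconstrained ReLU class on $\real^p$ with $O(\mathcal{S}+pd)=O(\mathcal{S})$ parameters, after which BHLM applies verbatim.
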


The proof of Lemma \ref{Lem: function_complexity} is presented in the Supplementary Material. We note this convergence rate may not be optimal, but is sufficient for deducing consistency. 

\begin{lemma}
Under Assumptions \ref{Asm: f0_smoothness} and \ref{Asm: y_subexp}, for sufficiently large $n$, we have
$$
\begin{aligned}
\E[y-\hat{f}_n(x)]^2\le \nu^2+Cn^{-2\alpha/(d+4\alpha)}(\log n)^3,
\end{aligned}
$$
where $C$ is a constant depending on $\mathcal{R}_0$ and $d$.
\label{Lem: excess_risk}
\end{lemma}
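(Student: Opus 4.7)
The plan is to perform a standard bias--variance decomposition: compare $\hat f_n$ to an ``oracle'' neural network $\tilde f \in \mathcal{F}_{\mathcal{L},\mathcal{M},\mathcal{S},\mathcal{R}}$ that approximates the regression function $f_0(B_0^{\T}\cdot)$ well in sup norm, and then control the uniform gap between empirical and population risk using Lemma \ref{Lem: function_complexity}. Using $\E(\epsilon\mid x)=0$ and $\var(\epsilon\mid x)=\nu^2$, the excess risk rewrites as $L(\hat f_n)-\nu^2 = \E[f_0(B_0^{\T}x)-\hat f_n(x)]^2$, so that the target inequality becomes a squared $L^2$ prediction bound. A direct add-and-subtract, combined with the empirical risk minimization inequality $L_n(\hat f_n)\le L_n(\tilde f)$, gives the oracle-type bound
$$
L(\hat f_n)-\nu^2 \;\le\; [L(\tilde f)-\nu^2] \;+\; 2\sup_{f\in\mathcal{F}_{\mathcal{L},\mathcal{M},\mathcal{S},\mathcal{R}}}|L(f)-L_n(f)|,
$$
for \emph{any} feasible $\tilde f$.

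For the approximation term, I would construct $\tilde f$ by fixing its first-layer matrix to $B_0\in\Psi_d$ and choosing the remaining layers to approximate $f_0$ uniformly on the bounded image of $B_0^{\T}x$. Invoking the quantitative version of Lemma \ref{Lem: nn_approx} from \citet{shen2019deep}, which gives sup-norm rate of order $\lambda(\mathcal{L}\mathcal{M})^{-2\alpha/d}$ for H\"older-$\alpha$ functions under Assumption \ref{Asm: f0_smoothness}, together with the prescribed scaling $\mathcal{L}=O(n^{d/(2d+8\alpha)})$ and $\mathcal{M}=O(1)$, yields a sup-norm error of order $n^{-\alpha/(d+4\alpha)}$; squaring and integrating gives $L(\tilde f)-\nu^2 = O(n^{-2\alpha/(d+4\alpha)})$. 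The condition $\mathcal{R}\ge\mathcal{R}_0$ guarantees that an appropriate truncation of $\tilde f$ at level $\mathcal{R}$ preserves $\tilde f\in\mathcal{F}_{\mathcal{L},\mathcal{M},\mathcal{S},\mathcal{R}}$ without degrading the approximation.

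For the estimation term, Lemma \ref{Lem: function_complexity}, which relies on Assumption \ref{Asm: y_subexp} through the concentration used in its proof, supplies
$$
\sup_{f\in\mathcal{F}_{\mathcal{L},\mathcal{M},\mathcal{S},\mathcal{R}}}|L(f)-L_n(f)| = O_p\!\left(\sqrt{\frac{\mathcal{S}\mathcal{L}\log(\mathcal{S})(\log n)^5}{n}}\right).
$$
Under the stated scaling, every layer has width $O(\mathcal{M})=O(1)$, hence $\mathcal{S}=O(\mathcal{L})$ and
$\mathcal{S}\mathcal{L}/n = O(n^{2d/(2d+8\alpha)-1}) = O(n^{-4\alpha/(d+4\alpha)})$;
together with $\log\mathcal{S}=O(\log n)$, the square root collapses to $n^{-2\alpha/(d+4\alpha)}(\log n)^3$. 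Combining the two pieces matches the claimed rate, and upgrading the in-probability bound to an in-expectation bound uses the uniform envelope $\|\hat f_n\|_\infty\le\mathcal{R}$ together with the sub-exponential moments of $y$ from Assumption \ref{Asm: y_subexp}.

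The principal obstacle I anticipate is the joint calibration of the approximation rate and the complexity rate at the same architecture, especially because the rank constraint forces the oracle's first layer to be exactly $B_0\in\Psi_d$: the subnetwork beyond the first layer must approximate $f_0$ on the image of $B_0^{\T}x$ rather than on the canonical cube $[0,1]^d$ appearing in Lemma \ref{Lem: nn_approx}. This is resolved by absorbing an affine rescaling from the range of $B_0^{\T}x$ into the second linear layer, but one has to verify that the additional weights stay within the depth/width/parameter budget defining $\mathcal{F}_{\mathcal{L},\mathcal{M},\mathcal{S},\mathcal{R}}$. A secondary point is that the logarithmic factor $(\log n)^3$ in the final bound arises \emph{entirely} from the complexity term, while the approximation error contributes only the polynomial rate, so the arithmetic must be carried out precisely at the chosen scaling to avoid accruing spurious log factors.
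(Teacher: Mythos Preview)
Your proposal is correct and follows essentially the same route as the paper: an oracle inequality via the ERM property $L_n(\hat f_n)\le L_n(\tilde f)$ plus two applications of the uniform deviation from Lemma~\ref{Lem: function_complexity}, combined with the Shen--Yang--Zhang approximation bound applied to an oracle network whose first layer is $B_0$, under the identical architecture scaling $\mathcal{L}=O(n^{d/(2d+8\alpha)})$, $\mathcal{M}=O(1)$, $\mathcal{S}=O(\mathcal{M}^2\mathcal{L})$. Your arithmetic for both the approximation rate $n^{-2\alpha/(d+4\alpha)}$ and the complexity rate $n^{-2\alpha/(d+4\alpha)}(\log n)^3$ matches the paper's, and you are in fact slightly more careful than the paper on two points it glosses over: the affine rescaling needed to map the range of $B_0^{\T}x$ into $[0,1]^d$ before invoking Lemma~\ref{Lem: nn_approx}, and the passage from the $O_p$ statement of Lemma~\ref{Lem: function_complexity} to the expectation form used in the final bound.
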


\begin{proof}[Proof of Lemma \ref{Lem: excess_risk}]
We begin with the decomposition that
$$
\begin{aligned}
L(\hat{f}_n) \le & \mathbb{E}\left[L_n(\hat{f}_n)+\sup_{f\in\mathcal{F}_{\mathcal{L}, \mathcal{M}, \mathcal{S}, \mathcal{R}}}|L(f)-L_n(f)|\right] \\
\le & \mathbb{E}\Bigg[\inf_{f\in\mathcal{F}_{\mathcal{L}, \mathcal{M}, \mathcal{S}, \mathcal{R}}}L_n(f\circ \mathcal{T}(f)\circ B_0^{\T}) \\
& +\sup_{f\in\mathcal{F}_{\mathcal{L}, \mathcal{M}, \mathcal{S}, \mathcal{R}}}|L(f)-L_n(f)|\Bigg] \\
\le & \mathbb{E}\Bigg[\inf_{f\in\mathcal{F}_{\mathcal{L}, \mathcal{M}, \mathcal{S}, \mathcal{R}}}L(f\circ \mathcal{T}(f)\circ B_0^{\T})\\
& +2\sup_{f\in\mathcal{F}_{\mathcal{L}, \mathcal{M}, \mathcal{S}, \mathcal{R}}}|L(f)-L_n(f)|\Bigg] \\
\le &\inf_{f\in\mathcal{F}_{\mathcal{L}, \mathcal{M}, \mathcal{S}, \mathcal{R}}}L(f\circ \mathcal{T}(f)\circ B_0^{\T}) \\
& +C_1\sqrt{\mathcal{S}\mathcal{L}\log(\mathcal{S})(\log n)^5n^{-1}},
\end{aligned}
$$
where $C_1$ is a constant, for sufficiently large $n$. Hence, 
$$
\begin{aligned}
& L(\hat{f}_n)-\nu^2 \\
=& L(\hat{f}_n)-L(f_0\circ B_0^{\T}) \\
=& L(\hat{f}_n)-\inf_{f\in\mathcal{F}_{\mathcal{L}, \mathcal{M}, \mathcal{S}, \mathcal{R}}}L(f\circ \mathcal{T}(f)\circ B_0^{\T}) \\
& +\inf_{f\in\mathcal{F}_{\mathcal{L}, \mathcal{M}, \mathcal{S}, \mathcal{R}}}L(f\circ \mathcal{T}(f)\circ B_0^{\T})-L(f_0\circ B_0^{\T}) \\
\le& C_1\sqrt{\mathcal{S}\mathcal{L}\log(\mathcal{S})(\log n)^5n^{-1}} \\
& +\inf_{f\in\mathcal{F}_{\mathcal{L}, \mathcal{M}, \mathcal{S}, \mathcal{R}}}\E[f_0(B_0^{\T} x)-f\circ\mathcal{T}(f)(B_0^{\T} x)]^2 \\
\le& C_1\sqrt{\mathcal{S}\mathcal{L}\log(\mathcal{S})(\log n)^5n^{-1}} \\
& +\inf_{g\in\mathcal{F}_{\mathcal{L}, \mathcal{M}, \mathcal{S}, \mathcal{R}}^*}\|f_0-g\|_{L^{\infty}([0, 1]^d)}^2.
\end{aligned}
$$
By Theorem 1.1 in \citet{shen2019deep}, there exists $g^*$ with $\mathcal{M}=3^{d+3}\max(d\lfloor K^{1/d}\rfloor, K+1), \mathcal{L}=12D+14+2d$ for some constant $K, D>0$ such that
$$
\|f_0-g^*\|_{L^{\infty}([0, 1]^d)}\le 19\sqrt{d}\lambda(KD)^{-2\alpha/d}\ .
$$
Let 
$$
\begin{aligned}
D=O\left(n^{d/(2d+8\alpha)}\right), \quad K=O(1),
\end{aligned}
$$
and observe that $\mathcal{S}=O(\mathcal{M}^2\mathcal{L})$. Then, it follows that
$$
\begin{aligned}
L(\hat{f}_n) &= \E[y-\hat{f}_n(x)]^2 \\
&\le \nu^2+Cn^{-2\alpha/(d+4\alpha)}(\log n)^3,
\end{aligned}
$$
where $C$ is a constant depending on $\mathcal{R}_0$ and $d$.
\end{proof}

With Lemma \ref{Lem: excess_risk}, the proof of Theorem 2 is a direct application of Theorem 5.9 in \citet{van2000asymptotic}.

\begin{proof}[Proof of Theorem 2]
Recall that $m_0=f_0\circ B_0^{\T}$. Let $R(f)=L(f)-L(m_0)=\E[f_0(B_0^{\T} x)-f(x)]^2\ge 0$ and $R_n(f)=L_n(f)-L_n(m_0)$. Then, Lemma \ref{Lem: function_complexity} indicates that
$$
\sup_{f\in\mathcal{F}_{\mathcal{L}, \mathcal{M}, \mathcal{S}, \mathcal{R}}}|R_n(f)-R(f)|\to 0, \quad \text{in probability},
$$
yielding $R_n(\hat{f}_n)=o_p(1)$ by incorporating with Lemma \ref{Lem: excess_risk}.

We denote the metric 
$$
d_1(f, m_0)=\min_{Q\in\mathcal{Q}}\|B_0-\mathcal{T}(f)Q\|_2\vee\|f_0\circ Q-f\circ \mathcal{T}(f)\|_{L^2(\mu)}.
$$
Here, $f\in\mathcal{F}_{\mathcal{L}, \mathcal{M}, \mathcal{S}, \mathcal{R}}$, $m_0=f_0\circ B_0^{\T}$, $a\vee b$ means $\max(a, b)$, and $\mu$ is the probability distribution of $\mathcal{T}(f)^{\T} x$. Recall that $d(B, B_0)=\min_{Q\in \mathcal{Q}}\|B_0-B Q\|_2$, where $B\in\Psi_d$. 

For $f$ and $\delta>0$ such that $d_1(f, m_0)\ge \delta$, if 
$$
d(\mathcal{T}(f), B_0)\le\tilde{\delta}=\min\{[\delta^2/(8\mathcal{R}_0\lambda)]^{1/\alpha}, \delta/2\},
$$
we have 
$$
\begin{aligned}
|f_0(B_0^{\T} x)-f_0(Q^{\T}\mathcal{T}(f)^{\T} x)| &\le \lambda\|B_0^{\T} x-Q^{\T}\mathcal{T}(f)^{\T} x\|_2^{\alpha} \\
&\le \lambda\|B_0^{\T} -Q^{\T}\mathcal{T}(f)^{\T} \|_2^{\alpha} \\
&\le \frac{\delta^2}{8\mathcal{R}_0},
\end{aligned}
$$
for some orthogonal matrix $Q$. Hence, 
$$
\begin{aligned}
R(f) &= \E[f_0(B_0^{\T} x)-f(x)]^2 \\
&\ge \E[f_0(Q^{\T}\mathcal{T}(f)^{\T} x)-f(x)]^2\\
& \hspace{2em} +2\E[f_0(B_0^{\T} x)-F_0][F_0-f(x)] \\
&\ge -\frac{\delta^2}{2}+d_1(f, m_0)^2\ge \frac{\delta^2}{2}>0,
\end{aligned}
$$
where $F_0=f_0(Q^{\T}\mathcal{T}(f)^{\T} x)$.

For the case that $d(\mathcal{T}(f), B_0)>\tilde{\delta}$, by Assumption \ref{Asm: metric_sharpness}, we have $R(f)\ge \xi>0$ for some positive constant $\xi$. To conclude, we obtain 
$$
\inf_{f: d_1(f, m_0)\ge \delta}R(f)>0,
$$
for any $\delta>0$. Finally, applying Theorem 5.9 in \citet{van2000asymptotic}, $R_n(\hat{f}_n)=o_p(1)$ implies that $d_1(\hat{f}_n, m_0)\to 0$ in probability. Hence, $d(\mathcal{T}(\hat{f}_n), B_0)\to 0$ in probability.
\end{proof}

\section{Discussion}

We have demonstrated that neural networks attain the capability of detecting the underlying low-dimensional structure that preserves all information in $x$ about the conditional mean function $\E(y|x)$. As a result, neural networks are suitable to be utilized for the estimation of central mean subspace $\Pi_{B_0}$. The theoretical investigations sharpen our understanding of neural networks, while broadening the scope of SDR as well.

In the context of SDR, a more general scenario than sufficient mean dimension reduction emerges when considering
\begin{equation} 
y\indep x | B_0^{\T}x.
\label{Eq: CS} 
\end{equation}
And the column space spanned by $B_0$ corresponds to the central subspace \citep{cook1998regression, li2018sufficient}.
It is clear that relation (\ref{Eq: CS}) is equivalent to that $p(y| x)=p(y| B_0^{\T}x)$, where $p(\cdot| \cdot)$ represents the conditional probability density function. Following the work of \citet{xia2007constructive}, we can further adapt neural networks to estimate the central subspace by modifying the loss function.

Under mild conditions, \citet{xia2007constructive} showed that
$$
\E[K_h(y-y_0)| x=x_0]\to p(y_0 | B_0^{\T}x_0), \quad \text{as }h\to 0^+.
$$
Here, $(x_0,y_0)$ is a fixed point, and $K_h(\cdot)$ is a suitable kernel function with a bandwidth $h$. {Such finding then implies that 
$$
\begin{aligned}
K_h(y-y_0) &= \E[K_h(y-y_0)| x]+\text{remainder term} \\
& \approx p(y_0 | B_0^{\T}x)+\text{remainder term}.
\end{aligned}
$$
Based on this discovery, it is natural to employ a neural network function $f(B^{\T}x, y_0): \mathbb{R}^{d+1}\to\mathbb{R}$ to approximate $p(y_0 | B_0^{\T}x)$, and obtain the estimate of $B_0$ at the population level by solving the following problem 
$$
(B^*, f^*)=\argmin{B, f}\E[K_h(y-\tilde{y})-f(B^{\T}x, \tilde{y})]^2,
$$
where $\tilde{y}$ is an independent copy of $y$.
} Empirically, we define the loss function as 
$$
\tilde{L}_n(B, f)=\frac{1}{n^2}\sum_{i=1}^n\sum_{j=1}^n[K_h(y_j-y_i)-f(B^{\T}x_j, y_i)]^2.
$$

We provide some additional simulation results to verify the feasibility utilizing neural networks for the estimation of central subspace; see the Supplementary Material. Theoretical analysis of neural networks for the estimation of the central subspace, including unbiasedness and consistency, deserves further studies.

\bibliography{aaai25}

\begin{thebibliography}{45}
\providecommand{\natexlab}[1]{#1}

\bibitem[{Abbe, Adsera, and Misiakiewicz(2022)}]{abbe2022merged}
Abbe, E.; Adsera, E.~B.; and Misiakiewicz, T. 2022.
\newblock The merged-staircase property: a necessary and nearly sufficient
  condition for sgd learning of sparse functions on two-layer neural networks.
\newblock In \emph{Conference on Learning Theory}, 4782--4887. PMLR.

\bibitem[{Abdeljawad and Grohs(2022)}]{abdeljawad2022approximations}
Abdeljawad, A.; and Grohs, P. 2022.
\newblock Approximations with deep neural networks in Sobolev time-space.
\newblock \emph{Analysis and Applications}, 20(03): 499--541.

\bibitem[{Bach(2017)}]{bach2017breaking}
Bach, F. 2017.
\newblock Breaking the curse of dimensionality with convex neural networks.
\newblock \emph{Journal of Machine Learning Research}, 18(19): 1--53.

\bibitem[{Barron(1993)}]{barron1993universal}
Barron, A.~R. 1993.
\newblock Universal approximation bounds for superpositions of a sigmoidal
  function.
\newblock \emph{IEEE Transactions on Information theory}, 39(3): 930--945.

\bibitem[{Bauer and Kohler(2019)}]{bauer2019on}
Bauer, B.; and Kohler, M. 2019.
\newblock {On deep learning as a remedy for the curse of dimensionality in
  nonparametric regression}.
\newblock \emph{The Annals of Statistics}, 47(4): 2261 -- 2285.

\bibitem[{Brandes et~al.(2022)Brandes, Ofer, Peleg, Rappoport, and
  Linial}]{brandes2022proteinbert}
Brandes, N.; Ofer, D.; Peleg, Y.; Rappoport, N.; and Linial, M. 2022.
\newblock ProteinBERT: A universal deep-learning model of protein sequence and
  function.
\newblock \emph{Bioinformatics}, 38(8): 2102--2110.

\bibitem[{Cho et~al.(2020)Cho, Yoo, Im, and Cha}]{cho2020comparative}
Cho, D.; Yoo, C.; Im, J.; and Cha, D.-H. 2020.
\newblock Comparative assessment of various machine learning-based bias
  correction methods for numerical weather prediction model forecasts of
  extreme air temperatures in urban areas.
\newblock \emph{Earth and Space Science}, 7(4): e2019EA000740.

\bibitem[{Cook(1996)}]{cook1996graphics}
Cook, R.~D. 1996.
\newblock Graphics for regressions with a binary response.
\newblock \emph{Journal of the American Statistical Association}, 91(435):
  983--992.

\bibitem[{Cook(1998a)}]{cook1998regression}
Cook, R.~D. 1998a.
\newblock \emph{Regression graphics}.
\newblock New York: Wiley.

\bibitem[{Cook and Li(2002)}]{cook2002dimension}
Cook, R.~D.; and Li, B. 2002.
\newblock Dimension reduction for conditional mean in regression.
\newblock \emph{The Annals of Statistics}, 30(2): 455--474.

\bibitem[{Cook and Weisberg(1991)}]{cook1991sliced}
Cook, R.~D.; and Weisberg, S. 1991.
\newblock Sliced inverse regression for dimension reduction: Comment.
\newblock \emph{Journal of the American Statistical Association}, 86(414):
  328--332.

\bibitem[{Doshi-Velez and
  Kim(2017)}]{doshivelez2017rigorousscienceinterpretablemachine}
Doshi-Velez, F.; and Kim, B. 2017.
\newblock Towards a rigorous science of interpretable machine learning.
\newblock arXiv:1702.08608.

\bibitem[{Fukumizu and Leng(2014)}]{fukumizu2014gradient}
Fukumizu, K.; and Leng, C. 2014.
\newblock Gradient-based kernel dimension reduction for regression.
\newblock \emph{Journal of the American Statistical Association}, 109(505):
  359--370.

\bibitem[{Ghorbani, Abid, and Zou(2019)}]{ghorbani2019interpretation}
Ghorbani, A.; Abid, A.; and Zou, J. 2019.
\newblock Interpretation of neural networks is fragile.
\newblock In \emph{Proceedings of the AAAI conference on artificial
  intelligence}, volume~33, 3681--3688.

\bibitem[{Ghosh(2022)}]{ghosh2022sufficient}
Ghosh, D. 2022.
\newblock Sufficient dimension reduction: An information-theoretic viewpoint.
\newblock \emph{Entropy}, 24(2): 167.

\bibitem[{Guidotti et~al.(2018)Guidotti, Monreale, Ruggieri, Turini, Giannotti,
  and Pedreschi}]{guidotti2018survey}
Guidotti, R.; Monreale, A.; Ruggieri, S.; Turini, F.; Giannotti, F.; and
  Pedreschi, D. 2018.
\newblock A survey of methods for explaining black box models.
\newblock \emph{ACM computing surveys (CSUR)}, 51(5): 1--42.

\bibitem[{Hornik, Stinchcombe, and White(1989)}]{hornik1989multilayer}
Hornik, K.; Stinchcombe, M.; and White, H. 1989.
\newblock Multilayer feedforward networks are universal approximators.
\newblock \emph{Neural networks}, 2(5): 359--366.

\bibitem[{Jiao et~al.(2023)Jiao, Shen, Lin, and Huang}]{jiao2023deep}
Jiao, Y.; Shen, G.; Lin, Y.; and Huang, J. 2023.
\newblock Deep nonparametric regression on approximate manifolds: Nonasymptotic
  error bounds with polynomial prefactors.
\newblock \emph{The Annals of Statistics}, 51(2): 691--716.

\bibitem[{Jumper et~al.(2021)Jumper, Evans, Pritzel, Green, Figurnov,
  Ronneberger, Tunyasuvunakool, Bates, {\v{Z}}{\'\i}dek, Potapenko
  et~al.}]{jumper2021highly}
Jumper, J.; Evans, R.; Pritzel, A.; Green, T.; Figurnov, M.; Ronneberger, O.;
  Tunyasuvunakool, K.; Bates, R.; {\v{Z}}{\'\i}dek, A.; Potapenko, A.; et~al.
  2021.
\newblock Highly accurate protein structure prediction with AlphaFold.
\newblock \emph{Nature}, 596(7873): 583--589.

\bibitem[{Kapla, Fertl, and Bura(2022)}]{kapla2022fusing}
Kapla, D.; Fertl, L.; and Bura, E. 2022.
\newblock Fusing sufficient dimension reduction with neural networks.
\newblock \emph{Computational Statistics \& Data Analysis}, 168: 107390.

\bibitem[{Lee and Abu-El-Haija(2017)}]{lee2017large}
Lee, J.; and Abu-El-Haija, S. 2017.
\newblock Large-scale content-only video recommendation.
\newblock In \emph{Proceedings of the IEEE International Conference on Computer
  Vision Workshops}, 987--995.

\bibitem[{Li(2018)}]{li2018sufficient}
Li, B. 2018.
\newblock \emph{Sufficient dimension reduction: Methods and applications with
  R}.
\newblock CRC Press.

\bibitem[{Li and Dong(2009)}]{li2009dimension}
Li, B.; and Dong, Y. 2009.
\newblock {Dimension reduction for nonelliptically distributed predictors}.
\newblock \emph{The Annals of Statistics}, 37(3): 1272 -- 1298.

\bibitem[{Li(1991)}]{li1991sliced}
Li, K.-C. 1991.
\newblock Sliced inverse regression for dimension reduction.
\newblock \emph{Journal of the American Statistical Association}, 86(414):
  316--327.

\bibitem[{Li(1992)}]{li1992principal}
Li, K.-C. 1992.
\newblock On principal Hessian directions for data visualization and dimension
  reduction: Another application of Stein's lemma.
\newblock \emph{Journal of the American Statistical Association}, 87(420):
  1025--1039.

\bibitem[{Li and Duan(1989)}]{li1989regression}
Li, K.-C.; and Duan, N. 1989.
\newblock Regression analysis under link violation.
\newblock \emph{The Annals of Statistics}, 17(3): 1009--1052.

\bibitem[{Liang, Sun, and Liang(2022)}]{liang2022nonlinear}
Liang, S.; Sun, Y.; and Liang, F. 2022.
\newblock Nonlinear sufficient dimension reduction with a stochastic neural
  network.
\newblock \emph{Advances in Neural Information Processing Systems}, 35:
  27360--27373.

\bibitem[{Lipton(2018)}]{lipton2018mythos}
Lipton, Z.~C. 2018.
\newblock The mythos of model interpretability: In machine learning, the
  concept of interpretability is both important and slippery.
\newblock \emph{Queue}, 16(3): 31--57.

\bibitem[{Ma and Zhu(2013)}]{ma2013efficient}
Ma, Y.; and Zhu, L. 2013.
\newblock Efficient estimation in sufficient dimension reduction.
\newblock \emph{Annals of statistics}, 41(1): 250.

\bibitem[{Saxe et~al.(2019)Saxe, Bansal, Dapello, Advani, Kolchinsky, Tracey,
  and Cox}]{saxe2019information}
Saxe, A.~M.; Bansal, Y.; Dapello, J.; Advani, M.; Kolchinsky, A.; Tracey,
  B.~D.; and Cox, D.~D. 2019.
\newblock On the information bottleneck theory of deep learning.
\newblock \emph{Journal of Statistical Mechanics: Theory and Experiment},
  2019(12): 124020.

\bibitem[{Schmidt-Hieber(2020)}]{schmidt2020nonparametric}
Schmidt-Hieber, J. 2020.
\newblock Nonparametric regression using deep neural networks with ReLU
  activation function.
\newblock \emph{The Annals of Statistics}, 48(4): 1875--1897.

\bibitem[{Shen et~al.(2022)Shen, Jiao, Lin, and Huang}]{shen2022approximation}
Shen, G.; Jiao, Y.; Lin, Y.; and Huang, J. 2022.
\newblock Approximation with cnns in Sobolev space: With applications to
  classification.
\newblock \emph{Advances in Neural Information Processing Systems}, 35:
  2876--2888.

\bibitem[{Shen, Yang, and Zhang(2020)}]{shen2019deep}
Shen, Z.; Yang, H.; and Zhang, S. 2020.
\newblock Deep network approximation characterized by number of neurons.
\newblock \emph{Communications in Computational Physics}, 28(5): 1768--1811.

\bibitem[{Shen, Yang, and Zhang(2021)}]{shen2021neural}
Shen, Z.; Yang, H.; and Zhang, S. 2021.
\newblock Neural network approximation: Three hidden layers are enough.
\newblock \emph{Neural Networks}, 141: 160--173.

\bibitem[{Silver et~al.(2018)Silver, Hubert, Schrittwieser, Antonoglou, Lai,
  Guez, Lanctot, Sifre, Kumaran, Graepel et~al.}]{silver2018general}
Silver, D.; Hubert, T.; Schrittwieser, J.; Antonoglou, I.; Lai, M.; Guez, A.;
  Lanctot, M.; Sifre, L.; Kumaran, D.; Graepel, T.; et~al. 2018.
\newblock A general reinforcement learning algorithm that masters chess, shogi,
  and Go through self-play.
\newblock \emph{Science}, 362(6419): 1140--1144.

\bibitem[{Thirunavukarasu et~al.(2023)Thirunavukarasu, Ting, Elangovan,
  Gutierrez, Tan, and Ting}]{thirunavukarasu2023large}
Thirunavukarasu, A.~J.; Ting, D. S.~J.; Elangovan, K.; Gutierrez, L.; Tan,
  T.~F.; and Ting, D. S.~W. 2023.
\newblock Large language models in medicine.
\newblock \emph{Nature medicine}, 29(8): 1930--1940.

\bibitem[{Tishby and Zaslavsky(2015)}]{tishby2015deep}
Tishby, N.; and Zaslavsky, N. 2015.
\newblock Deep learning and the information bottleneck principle.
\newblock In \emph{2015 ieee information theory workshop (itw)}, 1--5. IEEE.

\bibitem[{Troiani et~al.(2024)Troiani, Dandi, Defilippis, Zdeborova, Loureiro,
  and Krzakala}]{troiani2024fundamental}
Troiani, E.; Dandi, Y.; Defilippis, L.; Zdeborova, L.; Loureiro, B.; and
  Krzakala, F. 2024.
\newblock Fundamental limits of weak learnability in high-dimensional
  multi-index models.
\newblock In \emph{High-dimensional Learning Dynamics 2024: The Emergence of
  Structure and Reasoning}.

\bibitem[{Van~der Vaart(2000)}]{van2000asymptotic}
Van~der Vaart, A.~W. 2000.
\newblock \emph{Asymptotic statistics}, volume~3.
\newblock Cambridge university press.

\bibitem[{Xia(2007)}]{xia2007constructive}
Xia, Y. 2007.
\newblock {A constructive approach to the estimation of dimension reduction
  directions}.
\newblock \emph{The Annals of Statistics}, 35(6): 2654 -- 2690.

\bibitem[{Xia et~al.(2002)Xia, Tong, Li, and Zhu}]{xia2002adaptive}
Xia, Y.; Tong, H.; Li, W.~K.; and Zhu, L.-X. 2002.
\newblock An adaptive estimation of dimension reduction space.
\newblock \emph{Journal of the Royal Statistical Society Series B: Statistical
  Methodology}, 64(3): 363--410.

\bibitem[{Yang and Barron(1999)}]{yang1999information}
Yang, Y.; and Barron, A. 1999.
\newblock Information-theoretic determination of minimax rates of convergence.
\newblock \emph{Annals of Statistics}, 1564--1599.

\bibitem[{Yarotsky(2017)}]{yarotsky2017error}
Yarotsky, D. 2017.
\newblock Error bounds for approximations with deep ReLU networks.
\newblock \emph{Neural Networks}, 94: 103--114.

\bibitem[{Zhang et~al.(2021)Zhang, Ti{\v{n}}o, Leonardis, and
  Tang}]{zhang2021survey}
Zhang, Y.; Ti{\v{n}}o, P.; Leonardis, A.; and Tang, K. 2021.
\newblock A survey on neural network interpretability.
\newblock \emph{IEEE Transactions on Emerging Topics in Computational
  Intelligence}, 5(5): 726--742.

\bibitem[{Zhu, Jiao, and Jordan(2022)}]{zhu2022robust}
Zhu, B.; Jiao, J.; and Jordan, M.~I. 2022.
\newblock Robust estimation for non-parametric families via generative
  adversarial networks.
\newblock In \emph{2022 IEEE International Symposium on Information Theory
  (ISIT)}, 1100--1105. IEEE.

\end{thebibliography}

\end{document}